\newtheorem{theorem}{Theorem}[section]
\newtheorem{lemma}[theorem]{Lemma}
\newenvironment{proof}[1][Proof]{\begin{trivlist}
\item[\hskip \labelsep {\bfseries #1}]}{\end{trivlist}}
\newcommand{\qed}{\nobreak \ifvmode \relax \else
      \ifdim\lastskip<1.5em \hskip-\lastskip
      \hskip1.5em plus0em minus0.5em \fi \nobreak
      \vrule height0.75em width0.5em depth0.25em\fi}
\newcommand{\comment}[1]{}
\DeclareMathOperator*{\argmin}{argmin}
\ifcvprfinal\pagestyle{empty}\fi
\begin{document}
\onecolumn

\newpage

\newcommand*{\plogo}{\fbox{$\mathcal{PL}$}} 


\newcommand*{\titleAT}{\begingroup 
\newlength{\drop} 
\drop=0.05\textheight 

\includegraphics[scale=1.5]{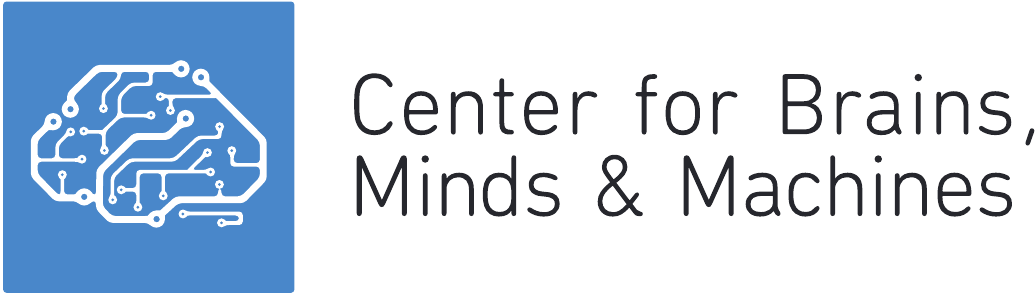}

\textcolor{CornflowerBlue}{\rule{\textwidth}{3 pt}}\par 
\vspace{2pt}\vspace{-\baselineskip} 
\rule{\textwidth}{0.4pt}\par 

\vspace{\drop} 
\textbf{\textsf{\large{CBMM Memo No. \memonumber}}}\quad \quad \quad\quad \quad \quad \quad\quad\quad \quad\quad\quad      \textbf{\large{\memodate}}

\vspace{\drop}
\begin{center}
\textbf{\textsf{\huge{\memotitle}}}\\
\vspace{0.4\drop}
\textbf{\Large{\textsf{by}}}\\
\vspace{0.4\drop}
\textbf{\textsf{\large{\memoauthors}}}
\end{center}
\vspace{\drop}
\textbf{\textsf{\large{\noindent Abstract}:}} {\memoabstract}

\textcolor{CornflowerBlue}{\rule{\textwidth}{3 pt}}\par 
\vspace{2pt}\vspace{-\baselineskip} 
\rule{\textwidth}{0.4pt}\par

\begin{minipage}{.15\linewidth}
\includegraphics[scale=0.1]{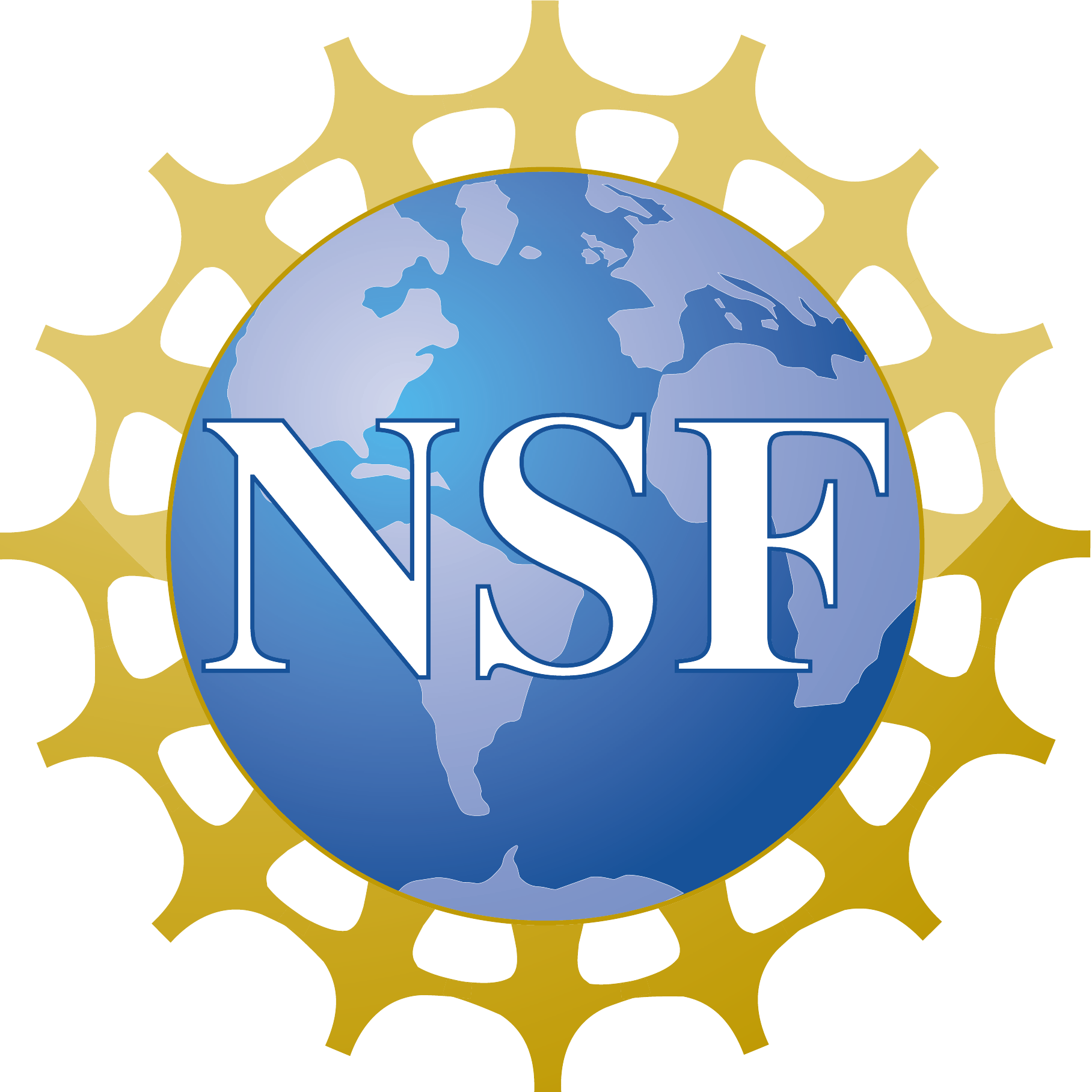}
\end{minipage}
\begin{minipage}{.84\linewidth}
\textbf{\textsf{\large{This work was supported by the Center for Brains, Minds and Machines (CBMM), funded by NSF STC award  CCF - 1231216.}}}
\end{minipage}
\endgroup}


\pagestyle{empty} 

\def\eg{\textsf{\it e.g.}}
\def\ie{\textsf{\it i.e.}}

\def\memonumber{ \textsf{013}} 
\def\memodate{\textsf{\today}} 
\def\memotitle{\textsf{Robust Estimation of 3D Human Poses from a Single Image}} 
\def\memoauthors{\textsf{Chunyu Wang$^{1}$,Yizhou Wang$^{1}$,Zhouchen Lin$^{1}$,Alan L. Yuille$^{2}$,Wen Gao$^{1}$}\\
$^{1}$Peking University, Beijing, China~~
$^{2}$University of California, Los Angeles\\
{\tt\small \{wangchunyu,yizhou.wang, zlin\}@pku.edu.cn~@yuille@stat.ucla.edu~wgao@pku.edu.cn}}

\def\memoabstract{\textsf{Human pose estimation is a key step to action recognition. We propose a method of estimating 3D human poses from a single image, which works in conjunction with an existing 2D pose/joint detector. 3D pose estimation is challenging because multiple 3D poses may correspond to the same 2D pose after projection due to the lack of depth information. Moreover, current 2D pose estimators are usually inaccurate which may cause errors in the 3D estimation. We address the challenges in three ways: (i) We represent a 3D pose as a linear combination of a sparse set of bases learned from 3D human skeletons. (ii) We enforce limb length constraints to eliminate anthropomorphically implausible skeletons. (iii) We estimate a 3D pose by minimizing the $L_1$-norm error between the projection of the 3D pose and the corresponding 2D detection. The $L_1$-norm loss term is robust to inaccurate 2D joint estimations. We use the alternating direction method (ADM) to solve the optimization problem efficiently. Our approach outperforms the state-of-the-arts on three benchmark datasets.}}

\titleAT 

\newpage
\twocolumn

\title{Robust Estimation of 3D Human Poses from a Single Image}

\author[1,2]{Chunyu Wang}
\author[1,2]{Yizhou Wang}
\author[2]{Zhouchen Lin}
\author[3]{Alan L. Yuille}
\author[1]{Wen Gao}
\affil[1]{Nat'l Engineering Lab for Video Technology, Sch'l of EECS, Peking University, Beijing, China}
\affil[2]{Key Lab. of Machine Perception (MOE), Sch'l of EECS, Peking University, Beijing, China}
\affil[3]{Department of Statistics, University of California, Los Angeles (UCLA), USA}

\maketitle

\begin{abstract}
Human pose estimation is a key step to action recognition. We propose a method of estimating 3D human poses from a single image, which works in conjunction with an existing 2D pose/joint detector. 3D pose estimation is challenging because multiple 3D poses may correspond to the same 2D pose after projection due to the lack of depth information. Moreover, current 2D pose estimators are usually inaccurate which may cause errors in the 3D estimation. We address the challenges in three ways: (i) We represent a 3D pose as a linear combination of a sparse set of bases learned from 3D human skeletons. (ii) We enforce limb length constraints to eliminate anthropomorphically implausible skeletons. (iii) We estimate a 3D pose by minimizing the $L_1$-norm error between the projection of the 3D pose and the corresponding 2D detection. The $L_1$-norm loss term is robust to inaccurate 2D joint estimations. We use the alternating direction method (ADM) to solve the optimization problem efficiently. Our approach outperforms the state-of-the-arts on three benchmark datasets.
\end{abstract}

\section{Introduction}
Action recognition is a key problem in computer vision \cite{wang2011mining} and has many applications such as human-computer interaction and video surveillance. Since an action is naturally represented by human poses \cite{ChunyuCVPR13}, 2D and 3D pose estimation has attracted a lot of attention. A 2D pose is usually represented by a set of joint locations \cite{Yang2D} whose estimation remains challenging because of the huge human appearance variation, viewpoint change, etc.

A 3D pose is typically represented by a skeleton model parameterized by joint locations \cite{Taylor} or by rotation angles \cite{lee2009human}. The representation is intrinsic as it is invariant to viewpoint changes. However, estimating 3D poses from a single image remains a difficult problem. First, a 3D pose is usually inferred from 2D joint locations. So, the accuracy of 2D joint estimation can greatly affect the 3D estimation performance. Second, multiple 3D poses may correspond to the same 2D pose after projection. This introduces severe ambiguities in 3D pose estimation. Third, the problem is further complicated when camera parameters are unknown.

\begin{figure}
\centering
\includegraphics[width=3.2in]{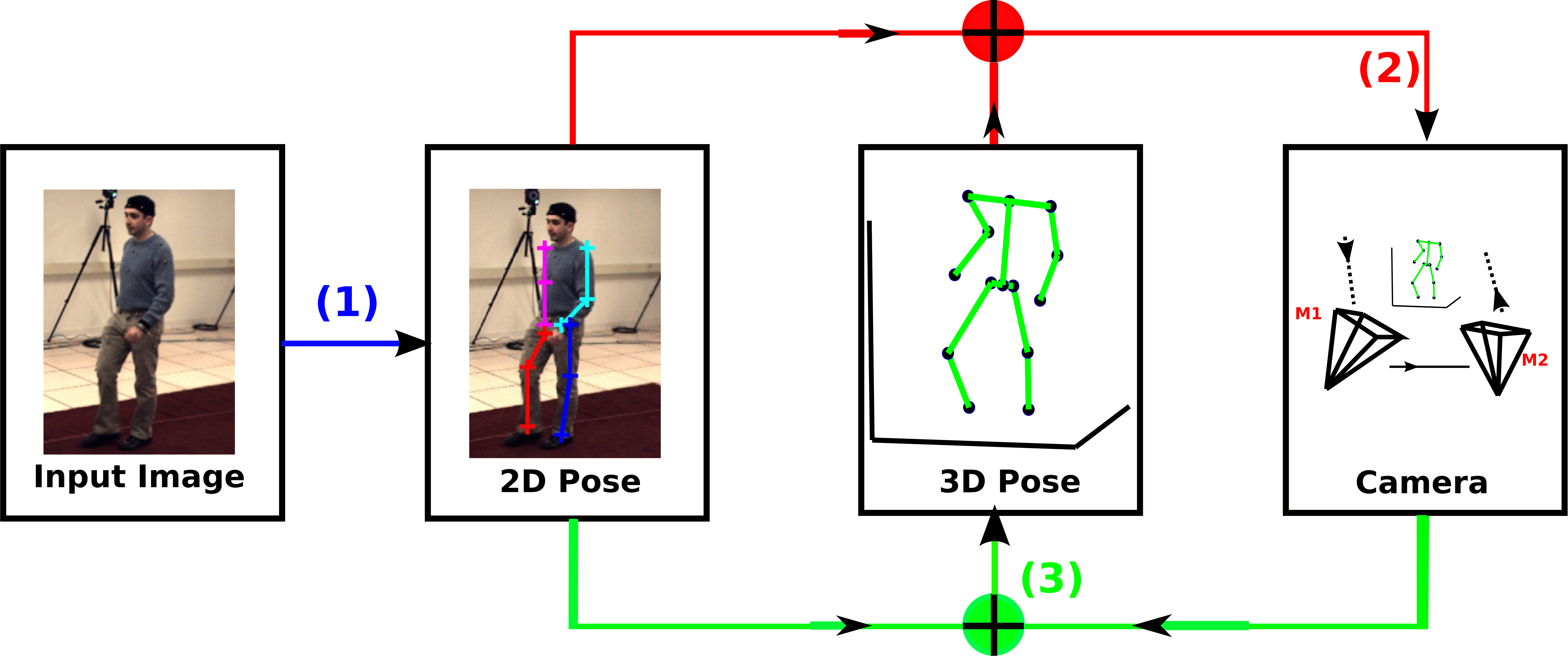}
\caption{\textbf{Method overview}. (1) On a test image, we first estimate the 2D joint locations and initialize a 3D pose. (2) Then camera parameters are estimated from the 2D and 3D poses. (3) Next we update the 3D pose with the current camera parameters and the 2D pose. We repeat steps (2) and (3) until convergence.}
\label{fig:framework}
\end{figure}

We propose a novel method, which alternately updates the 3D pose and camera parameters. Figure \ref{fig:framework} shows the overview of the method. On an input image, we first employ a 2D pose estimator ({\it e.g.} \cite{Yang2D}) to detect the 2D joints. Then we initialize a 3D pose ({\it e.g.} the mean pose). Using both the poses, we estimate the camera parameters (step 2). Next, we re-estimate the 3D pose with the current camera parameters (step 3). Step 2 and 3 are iterated until convergence.

We represent a 3D human pose by a linear combination of a set of overcomplete bases. Since human poses lie in a low dimensional space \cite{elgammal2004inferring}, in the basis pursuit optimization, we enforce an $L_1$-norm regularization on the basis coefficients so that only a few of them are activated. Such holistic representation is able to reduce the ambiguities in the 3D pose estimation and is robust to occlusions ({\it e.g.} missing joints), because it encodes the structural prior of the human skeleton manifold.

We estimate a 3D pose ({\em i.e.} basis coefficients) by minimizing an $L_1$-norm penalty between the projection of the 3D joints and the 2D detections. The commonly used $L_2$-norm tends to distribute errors evenly over all variables. When some joints of the estimated 2D pose are inaccurate, the inferred 3D pose may be biased to a completely wrong configuration. In contrast, $L_1$-norm is more tolerant to the inaccurate 2D joints. However, even if the $L_1$-norm error is adopted, the inferred 3D skeleton may still violate the anthropometric quantities such as limb proportions. Hence, we enforce eight limb length constraints on the estimated 3D pose to eliminate the incorrect ones.

We use an efficient alternating direction method (ADM) to solve our optimization problem. Although global optimality is not guaranteed, we obtain reasonably good solutions. Our method outperforms the state-of-the-arts on three benchmark datasets.

The paper is organized as follows. Section \ref{sec:relatedwork} reviews the related work. Section \ref{sec:approach} introduces the proposed approach. Section \ref{sec:experiment} shows implementation details and experiment results. Conclusion is in section \ref{sec:summary}. Section \ref{sec:optimization} (Appendix) presents the optimization method in detail.

\section{Related Work}
\label{sec:relatedwork}

Existing work on 3D pose estimation can be classified into four categories according to their inputs to the system, {\it e.g.} the image, image features, camera parameters, etc.
The first class \cite{Lee} \cite{SimoSerraCVPR2012} takes camera parameters as inputs. For example, Lee et al. \cite{Lee}
represent a 3D pose by a skeleton model and parameterize the body
parts by truncated cones. They estimate the rotation angles of body parts by minimizing
the silhouette discrepancy between the model projections and the
input image by applying Markov Chain Monte Carlo
(MCMC). Simo-Serra et al. \cite{SimoSerraCVPR2012} represent
a 3D pose by a set of joint locations. They automatically estimate the 2D pose, model each joint by a Gaussian distribution, and propagate the uncertainty to 3D pose space. They
sample a set of 3D skeletons from the space and learn a SVM to determine the most feasible one.

The second class \cite{c6} \cite{c5} requires manually labeled 2D joint locations in a video as input.
Valmadre et al. \cite{c6} first apply structure from motion to
estimate the camera parameters and the 3D pose of the rigid torso,
and then require human input to resolve the depth ambiguities for
non-torso joints. Wei et al. \cite{c5} propose the ``rigid body
constraints'', {\em e.g.} the pelvis, left and right hip joints form a
rigid structure, and require that the distance between any two joints
on the rigid structure remains unchanged across time. They
estimate the 3D poses by minimizing the discrepancy between the
projection of the 3D poses and the 2D joint detections without violating the
``rigid body constraints''.

The third class \cite{Taylor} \cite{Ramakrishna} requires manually labeled 2D joints in one image. Taylor \cite{Taylor} assumes that the limb lengths are known and calculates the relative depths of the limbs by considering foreshortening. It requires human input to resolve the depth ambiguities at each joint. Ramakrishna et al. \cite{Ramakrishna} represent a 3D pose by a linear combination of a set of bases. They split the
training data into classes, apply PCA to each class, and combine
the principal components as bases. They greedily add the most
correlated basis into the model and estimate the coefficients by
minimizing an $L_2$-norm error between the projection of 3D pose
and the 2D pose. They enforce a constraint on the sum of the limb
lengths, which is just a weak constraint. This work \cite{Ramakrishna} achieves the state-of-the-art performance but relies on \emph{manually labeled} 2D joint
locations.

The fourth class \cite{mori2006recovering} \cite{elgammal2004inferring} requires only a single image or image features ({\it e.g.} silhouettes). For example, Mori et al. \cite{mori2006recovering} match a test image to the stored exemplars using shape context descriptors, and transfer the matched 2D pose to the test image. They lift the 2D pose to 3D using the method proposed in \cite{Taylor}. Elgammal et al.\cite{elgammal2004inferring} propose to learn view-based silhouettes manifolds and the mapping function from the manifold to 3D poses. These approaches do not explicitly estimate camera parameters, but require a lot of training data from various viewpoints.

Our method requires only a single image to infer 3D human poses. It is similar to \cite{Ramakrishna}
but there are five distinctive differences. (i) We obtain 2D
joint locations by running a detector \cite{Yang2D} rather
than by manual labeling. (ii) We use $L_1$-norm penalty instead
of the $L_2$-norm one as it is more robust to inaccurate 2D joint locations. (iii) They \cite{Ramakrishna} enforce a weak anthropomorphic constraint
({\em i.e.} sum of limb length) for the sake of computational
simplicity, which is insufficient to eliminate incorrect poses;
while we enforce eight limb length constraints, which is much more
effective. (iv) We enforce an
$L_1$-norm constraint on the basis coefficients rather than
greedily adding bases into the model to encourage sparsity. They
need to re-estimate the coefficients every time a new
basis is introduced, which is inefficient. (v) We use an efficient
alternating direction method to solve our optimization problem.

\section{Our Approach}
\label{sec:approach} We represent 2D and 3D poses by $n$ joint
locations $x \in \mathbb{R}^{2n}$ and $y \in \mathbb{R}^{3n}$,
respectively. By assuming a weak perspective camera model, the 2D projection $x$ of a 3D
pose $y$ in an image are related as: $x=M y$, where $M=I_{n}
\otimes M_0$, in which $I$ is the identity matrix, $\otimes$ is
the
Kronecker product, and $M_0= \begin{pmatrix} m_1^T\\
m_2^T
\end{pmatrix}
\in \mathbb{R}^{2 \times 3}$ is the camera projection matrix.
Given the estimated $x$, we alternately estimate the camera parameter
$M_0$ and the 3D pose $y$. We describe the details for 3D pose estimation in section \ref{sec:3d_pose_sec} and for camera parameter
estimation in section \ref{sec:camera_sec}.

\subsection{Robust 3D Pose Estimation}
\label{sec:3d_pose_sec} We represent a 3D pose $y$ as a linear
combination of a set of bases $B=\{b_1, \cdots, b_k\}$, {\em i.e.}
$y=\sum_{i=1}^k{\alpha_i \cdot b_i} + \mu $, where $\alpha$ are
the basis coefficients and $\mu$ is the mean pose. Given a 2D pose
$x$ and camera parameter $M_0$, we estimate the coefficients
$\alpha$ by minimizing an $L_1$-norm error between the projection of
the estimated 3D pose and the 2D pose: $\left\| M \left( B \alpha
+ \mu \right)-x \right\|_1$. We also enforce $L_1$-norm
regularization on the basis coefficients $\alpha$ and eight limb
length constraints on the inferred 3D pose.

\begin{figure}[th]
\centering
\includegraphics[width=2.3in]{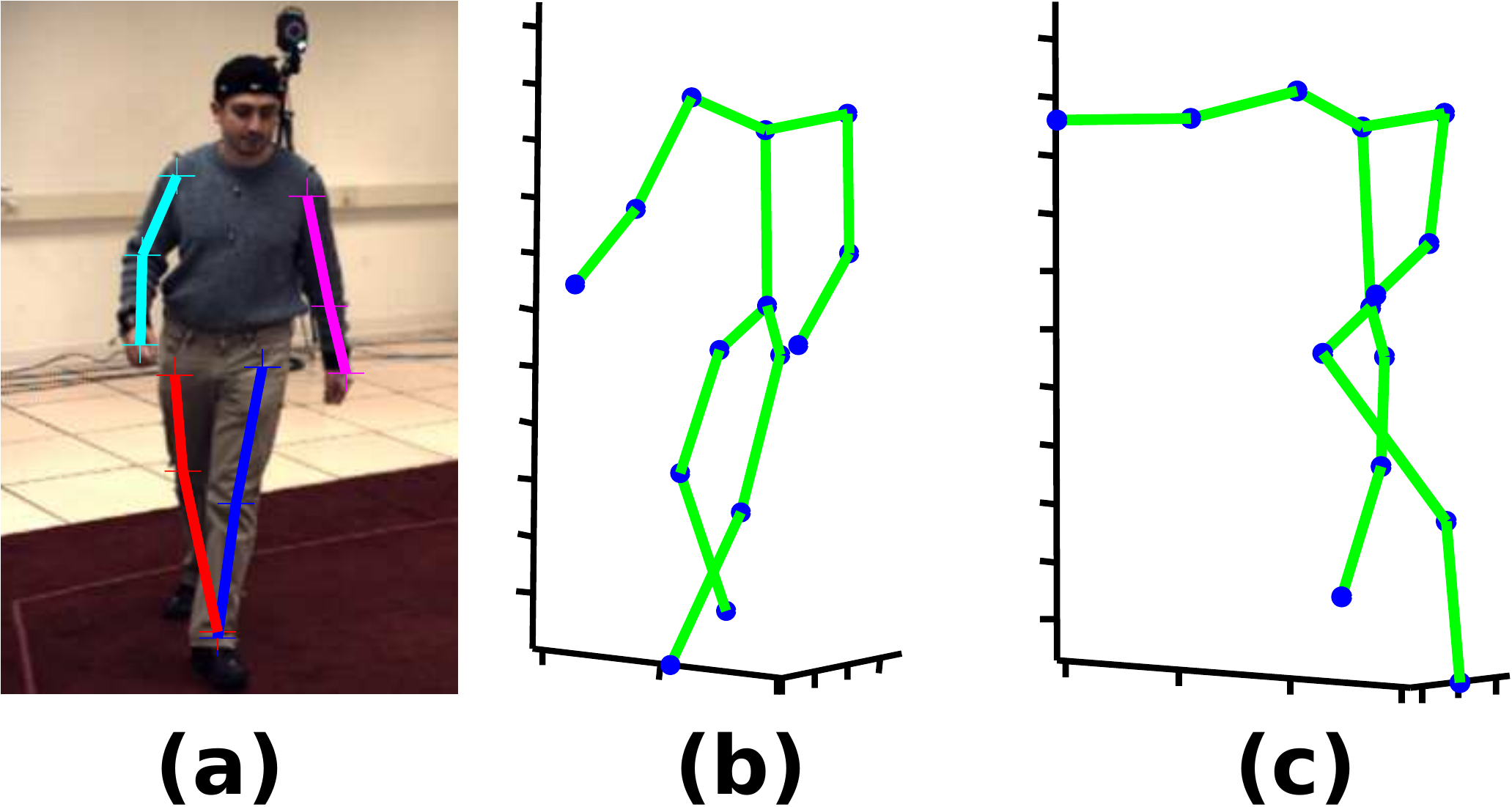}
\caption{Comparison of 3D pose estimation by minimizing $L_1$-norm
vs $L_2$-norm penalty. \textbf{(a)} estimated 2D joint
locations where the right foot location is inaccurate.
\textbf{(b-c)} are the estimated 3D poses using the $L_1$-norm and
$L_2$-norm, respectively. The $L_2$-norm penalty biases the
estimation to a wrong pose.}
\vspace{-1em} \label{fig:L1_L2_sample}
\end{figure}

\subsubsection{$L_1$-norm Objective Function}
\label{sec:l1norm} $L_2$-norm is the most widely used
error metric in the literature. However, it is
sensitive to inaccuracies in 2D pose estimation, which are usually
caused by failures in feature detections and other factors,
because it tends to distribute errors uniformly. In this work, we
propose to minimize an $L_1$-norm error, {\em i.e.} $\left\| x-M (B \alpha
+\mu) \right\|_1$. As a widely used robust regularizer in statistics, the
$L_1$ penalty is robust to inaccurate 2D joint outliers. For
example, in Figure \ref{fig:L1_L2_sample} the 2D location of the
right foot is inaccurate. The estimated 3D pose using $L_2$-norm
error is drastically biased to a wrong configuration. The
camera parameter estimation is also incorrect. However, using
$L_1$-norm returns a reasonable 3D pose. Extensive experiments on
benchmark datasets justify that using the $L_1$-norm can improve
the performance, especially when 2D pose estimation is
inaccurate.

\subsubsection{Sparsity Constraint on the Basis Coefficients}
Although human poses are highly variant, they lie in a
low dimensional space \cite{Safonova}. Hence, we enforce sparsity on
the basis coefficients $\alpha$ so that each 3D pose is
represented by only a few bases. The sparsity can be induced by
minimizing the $L_1$-norm of $\alpha$. This is an important
structural prior to remove incorrect or anthropomorphically
implausible 3D poses. In addition, the sparsity constraint can
also prevent overfitting to (inaccurate) 2D pose estimations. If
there is no sparsity constraint, given a large number of bases we
can always decrease the projection error to zero for an arbitrary
2D pose; however, there is no guarantee that the resulted 3D pose
is correct. In experiments, we observe that the sparsity
constraint is quite important. In summary, the resulted objective function is:

\begin{equation}
\begin{aligned}
& \underset{\alpha}{\text{min}}
& & { \left\| x-M \left(B \alpha +\mu \right) \right\|_1 + \theta \left\|\alpha\right\|_1}
\end{aligned}
\label{eq:objective}
\end{equation}
where $\theta>0$ is a parameter that balances the loss term and
the regularization term.

\subsubsection{Anthropomorphic Constraints}
We require that the eight limb lengths of a 3D pose comply with
certain proportions \cite{c14}. The eight limbs are
left/right-upper/lower-arm/leg. We define a joint selection matrix $E_j=[0, \cdots, I,
\cdots, 0]\in \mathbb{R}^{3 \times 3n}$, where the $j_{th}$ block is
the identity matrix. The product of $E_j$ and $y$ is the 3D
location of the $j_{th}$ joint in pose $y$. Let
$C_i=E_{i_1}-E_{i_2}$. Then $\left\|C_i y\right\|_2^2$ is the
squared length of the $i_{th}$ limb whose ends are the $i_1$-th
and $i_2$-th joints.

We normalize the length of the right lower leg to one and compute
the squared lengths of other limbs (say $L_i$) according to the limb
proportions used in \cite{c14}. The proportions are kept the same for all
people. Now we have constraints $\left\|C_i \left(B \alpha+\mu
\right) \right\|_2^2 = L_i$. Given the camera parameters we can
formulate the 3D pose estimation problem as follows:
\begin{equation}
\begin{aligned}
& \underset{\alpha}{\text{min}}
& & { \left\| x-M \left(B \alpha +\mu \right) \right\|_1 + \theta \left\| \alpha \right\| _1 } \\
& \text{s.t.}
& & \left\| C_i \left(B \alpha+\mu \right) \right\|_2^2 = L_i, i=1,\cdots,t
\end{aligned}
\label{eq:final}
\end{equation}

\subsection{Robust Camera Parameter Estimation}
\label{sec:camera_sec} Given a 3D pose, we estimate the camera
parameters by minimizing the $L_1$-norm projection error. We
reshape the 2D and 3D poses, $x$ and $y$, as $X \in \mathbb{R}^{2
\times n}$ and $Y \in \mathbb{R}^{3 \times n}$, respectively. Then
ideally $X=M_0Y$ should hold, where $ M_0=\left(
\begin{array}{c}m_1^T
\\ m_2^T\end{array}\right)$ is the projection matrix of a weak
projective camera, {\em i.e.} $m_1^Tm_2=0$. Due to errors, we estimate the camera parameters $m_1$ and $m_2$ by solving the
following problem:
\begin{equation}
\underset{m_1, m_2}{\text{min}} \left\| X-\left(
\begin{array}{c}
m_1^T \\
m_2^T
\end{array} \right)Y \right\|_1 ,\quad \text{s.t.}
\quad m_1^Tm_2=0. \label{eq:camera}
\end{equation}

\subsection{Optimization}
We alternately update the 3D pose and the camera parameters. We
first initialize the 3D pose $X$ by the mean pose of the
training data, and estimate camera parameters $m_1$ and $m_2$ by
solving problem (\ref{eq:camera}). With the updated camera
parameters, we then re-estimate the 3D pose by solving problem
(\ref{eq:final}). We repeat the above process until convergence or
the maximum number of iterations is reached. We use the
alternating direction method to solve the two optimization
problems efficiently. Please see Appendix for details.

\section{The Experimental Results}
\label{sec:experiment} We conduct two types of experiments to
evaluate our approach. The first type is controlled. We assume
that the 2D joint locations are known and evaluate
the influence: (i) of the \textbf{three factors} ({\em i.e.} the sparsity term,
the anthropomorphic constraints and the $L_1$-norm penalty), (ii) of
the inaccurate 2D pose estimations and (iii) of the human-camera angles, on
the 3D pose estimation performance. The second type is real. We
estimate the 2D pose in an image by a detector \cite{Yang2D} and then estimate the 3D skeletons. We compare our
method with the state-of-the-art ones \cite{Ramakrishna}
\cite{SimoSerraCVPR2012} \cite{Daubney}. Our approach can also
refine the 2D pose estimation by projecting the estimated 3D pose
to 2D image.

\begin{figure}
\subfigure[]{
\includegraphics[width=1.55in]{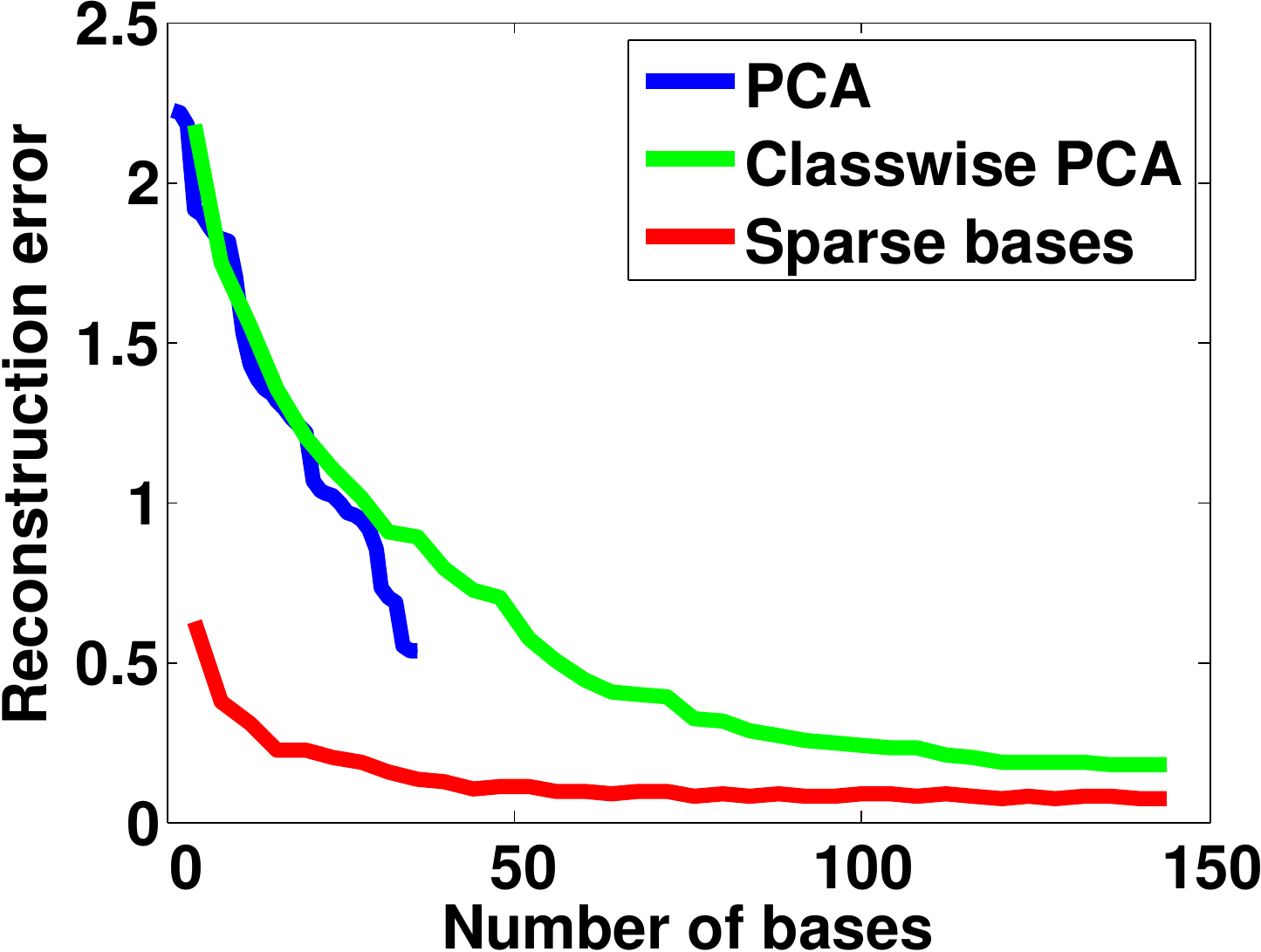}}
\subfigure[]{
\includegraphics[width=1.55in]{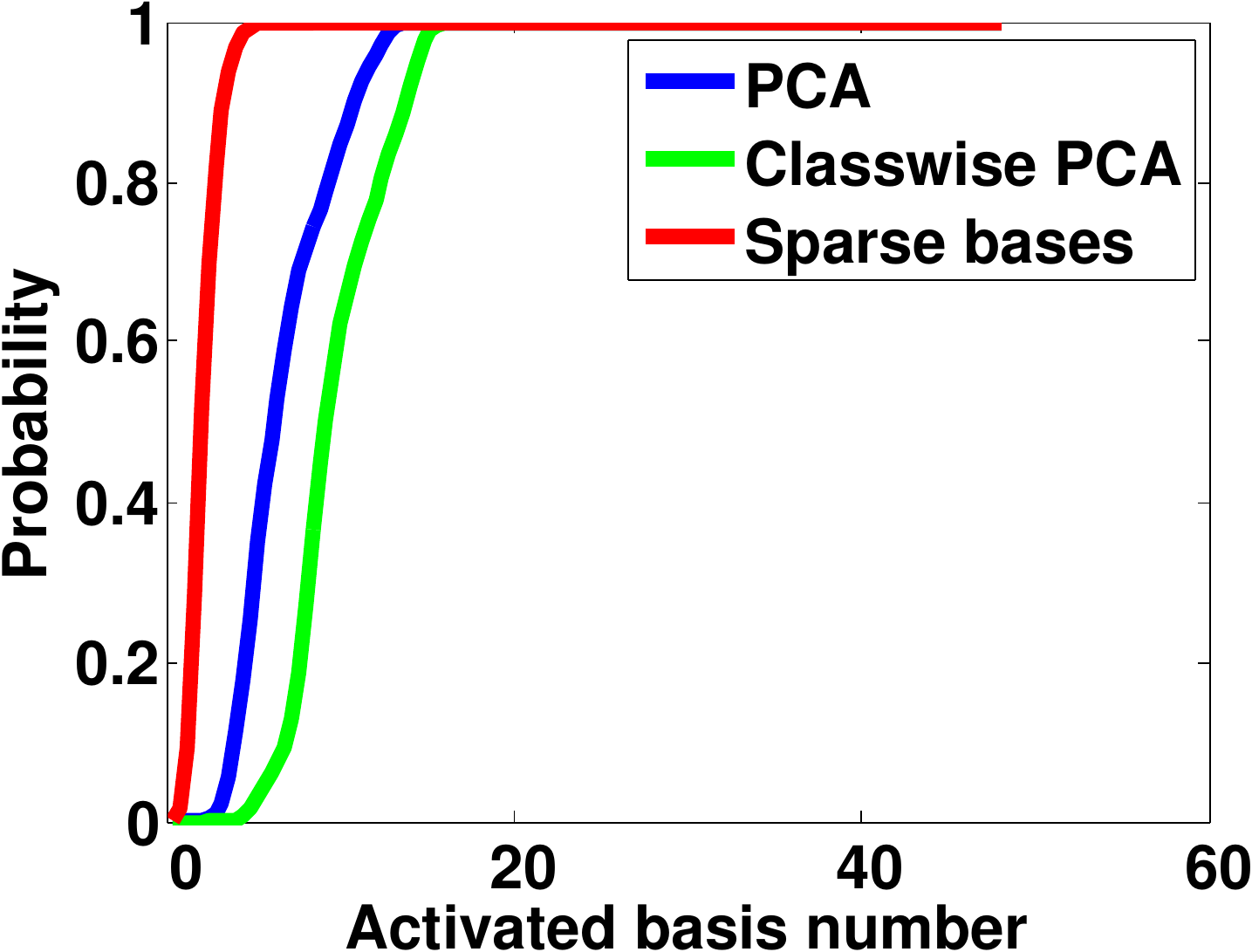}}
\centering \caption{Comparison of the three basis learning methods
on the CMU dataset. \textbf{(a)} 3D pose reconstruction errors
using different number of bases. \textbf{(b)} Cumulative distribution of the number of activated bases in represent the 3D poses. The y-axis is the percentage of the cases whose activated basis number is less than or equal to the corresponding x-axis value on the curves.}
\label{fig:basis}
\end{figure}

We use 12 body joints, {\em i.e.} the left and right shoulders, elbows,
hands, hips, knees and feet, being consistent with the 2D
pose detector \cite{Yang2D}. $200$ bases are used for all
experiments and about $6$ of them are activated for representing a 3D pose. In optimization, we terminate the
algorithm if the number of iterations exceeds $20$.

\subsection{The Datasets}
We evaluate our approach on three datasets: the CMU motion dataset
\cite{c10}, the HumanEva dataset \cite{sigal2006humaneva} and the
UvA 3D pose dataset \cite{c28}. For the CMU dataset, we learn the bases on actions of
``climb'', ``swing'', ``sit'' and ``jump'', and test on different
actions of ``walk'', ``run'', ``golf'' and ``punch'' to justify
the generalization ability of our method. For the HumanEva
dataset, we use the walking and jogging actions of three subjects
for evaluation as in \cite{SimoSerraCVPR2012}. For the UvA dataset, we use the first four sequences for
training and the remaining eight for testing.

\begin{figure}
\centering
\includegraphics[width=2.2in]{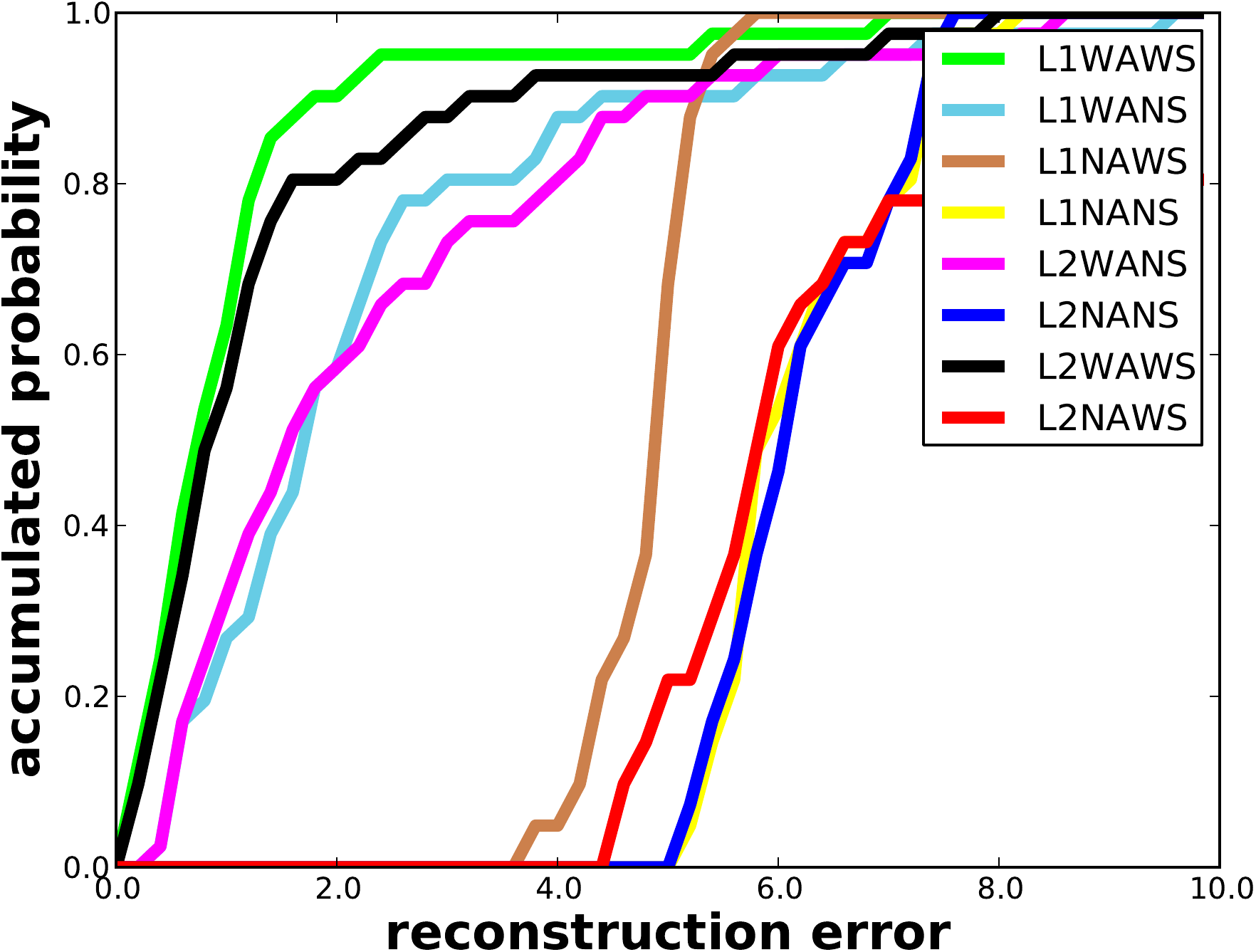}
\caption{\textbf{Controlled experiment:} Cumulative distribution
of 3D pose estimation errors on the CMU dataset. The y-axis is the percentage of the cases whose estimation error is less than or equal to the corresponding x-axis value on the curves.} \vspace{-1em} \label{fig:CMU_UVA}
\end{figure}

\subsection{Basis Learning}
\label{sec:basislearning} Our approach pursues a set of sparse
bases by enforcing an $L_1$-norm regularization on the basis
coefficients (as in \cite{MairalSparse}). But we also compare with
other two basis learning methods. The first method applies principle component analysis (PCA) on the training set and uses the first $k$ principal components as the bases. The second splits the training set into
different classes by action labels, then applies PCA on each class,
and finally collect the principal components as bases (which we
call classwise PCA) as in \cite{Ramakrishna}.

We learn the bases on the training data (of
four action classes) of the CMU dataset, and reconstruct each test
3D pose by solving an $L_1$-norm regularized least square problem.
The reconstruction errors are shown in Figure \ref{fig:basis} (a).
The sparse bases consistently achieve the
lowest errors among the three methods. Note that the maximum
number of bases for PCA and classwise PCA is $36$ (i.e. the
dimension of a 3D pose) and $144$, respectively. In addition,
fewer bases are activated using the $L_1$-norm induced bases (see Figure \ref{fig:basis}
(b)). This
justifies the bases' representative power.

\subsection{Controlled Experiments}
We assume the ground-truth 2D pose $x$ is known and recover the 3D pose
$y$ from $x$. The residual error between the estimated 3D pose
$\hat{y}$ and the ground truth $y$, {\em i.e.} $||y-\hat{y}||_2$, is
used as the evaluation criterion as in \cite{Ramakrishna}.

\subsubsection{Influence of the Three Factors}
\label{sec:threefactors} We design seven baselines to evaluate the
influence of the three factors, {\em i.e.} the sparsity term, the
anthropomorphic constraints and the $L_1$-norm penalty. The first
baseline is symbolized as L2NAWS, {\em i.e.} the approach uses an $L_2$-norm objective
function, No Anthropomorphic constraints and With the Sparsity
constraint. The remaining baselines are L2NANS, L2WANS, L2WAWS,
L1NANS, L1NAWS and L1WANS, which can be similarly understood by
their names. We solve the optimization problem in L2WANS and
L2WAWS by the alternating direction method. The optimization
problems in other baselines can be solved trivially.

Figure \ref{fig:CMU_UVA} shows the results on the CMU dataset.
First, the baselines without the sparsity term perform worse than
those with the sparsity term. Second, enforcing limb length constraints improves the
performance ({\it e.g.} L2WAWS outperforms L2NAWS). Third, $L_1$-norm
outperforms $L_2$-norm ({\it e.g.} L1NAWS is better than L2NAWS). Finally, our
approach performs best among the baselines.

\subsubsection{Influence of Inaccurate 2D Poses}
\label{sec:influenc_inaccurate} We evaluate the robustness of our
approach against inaccurate 2D pose estimations. We synthesize noisy 2D poses by generating ten
levels of random Gaussian noises and adding them to the original 2D poses. The
magnitude of the tenth (largest) level noise is one, which is equal to the normalized length of the right lower leg.
We estimate the 3D poses from those corrupted 2D joints.

Figure \ref{fig:noise_CMU} shows the results. First, L1NANS
outperforms L2NANS, which demonstrates that $L_1$-norm is more
robust to 2D pose errors. Second, L2NANS and L2WANS get larger
errors than L2NAWS and L2WAWS, respectively, which shows the
importance of sparsity in handling inaccurate 2D poses. Our
approach achieves a better performance than all baselines and
Ramakrishna et al's method \cite{Ramakrishna}.

\begin{table}
\caption{\textbf{Real experiment on the HumanEva dataset:}
comparison with the state-of-the-art methods
\cite{SimoSerraCVPR2012} \cite{Daubney}. We present results for
both walking and jogging actions of all three subjects and camera
C1. The numbers in each cell are the root mean square error (RMS)
and standard deviation, respectively. We use the unit of
millimeter as in \cite{SimoSerraCVPR2012} and \cite{Daubney}. The
length of the right lower leg is about $380$ mm. See Section
\ref{sec:realexperiment}.} \centering
\begin{tabular}{|c|c|c|c|}
\hline
Walking & S1 & S2 & S3 \\
\hline
Ours & 71.9 (19.0) & 75.7 (15.9) & 85.3 (10.3) \\
\hline
\cite{SimoSerraCVPR2012} & 99.6 (42.6) & 108.3 (42.3) & 127.4 (24.0) \\
\hline
\cite{Daubney} & 89.3 & 108.7 & 113.5 \\
\hline

\hline
\hline
Jogging & S1 & S2 & S3 \\
\hline
Ours & 62.6 (10.2) & 77.7 (12.1) & 54.4 (9.0) \\
\hline
\cite{SimoSerraCVPR2012} & 109.2 (41.5) & 93.1 (41.1) & 115.8 (40.6) \\
\hline
\end{tabular}
\label{table:stateofart}
\end{table}

\subsubsection{Influence of Human-Camera Angles}
\label{sec:humancamera} We explore the influence of human-camera
angles on 3D pose estimation. We first transform
the 3D poses into a local coordinate system, where the x-axis is
defined by the line passing the two hips, the y-axis is defined by
the line of spine and the z-axis is the cross product of the
x-axis and y-axis. Then we rotate the 3D poses around y-axis by a
particular angle, ranging from 0 to 180, and project them to 2D by a weak perspective camera. We estimate the 3D poses
from their 2D projections. Figure \ref{fig:camera_angle} shows
that the estimation errors using \cite{Ramakrishna} increase
drastically as human moves from profile (90 degrees) towards
frontal pose (0 degree). This may be due to the fact that frontal view has more severe foreshortenings than the profile view, hence introduces more ambiguities into 3D pose estimation. Our approach is more
robust against viewpoint changes.

\begin{table*}
\caption{\textbf{Real experiment on the UvA dataset:} Comparison
of 2D pose estimation results. We report: (1) the Probability of
Correct Pose (PCP) for the eight body parts ({\em i.e.} left upper arm
(LUA), left lower arm (LLA), right upper arm (RUA), right lower
arm (RLA), left upper leg (LUL), left lower leg (LLL), right upper
leg (RUL) and right lower leg (RLL)), (2) PCP for the whole pose,
(3) and the Euclidean distance between the estimated 2D pose and
the groundtruth in pixels.} \centering
\begin{tabular}{|c|c|c|c|c|c|c|c|c|c||c|}
\hline
\multirow{2}{*}{} & \multicolumn{9}{|c|}{ PCP} & \multirow{2}{*}{ Pixel Diff.} \\
\cline{2-10}
 & LUA & LLA & RUA & RLA & LUL & LLL & RUL & RLL & Overall &  \\
\hline
Yang et al. \cite{Yang2D} & 0.751  &\textbf{0.416} & 0.771 & \textbf{0.286} & 0.857 & 0.825 & 0.910 & 0.894 & 0.714 & 109\\
\hline
Ramakrishna et al. \cite{Ramakrishna} & 0.792          & 0.383 & 0.722 & 0.241 & 0.906 & 0.829 & 0.890 & 0.849 & 0.702 & 62\\
\hline
Ours                  & \textbf{0.829} & 0.376 & \textbf{0.800} & 0.245 & \textbf{0.955} & \textbf{0.861} & \textbf{0.963} & \textbf{0.902} & \textbf{0.741} & \textbf{55}\\
\hline
\end{tabular}
\label{table:2dpose}
\end{table*}

\begin{figure}
\centering
\includegraphics[width=2in]{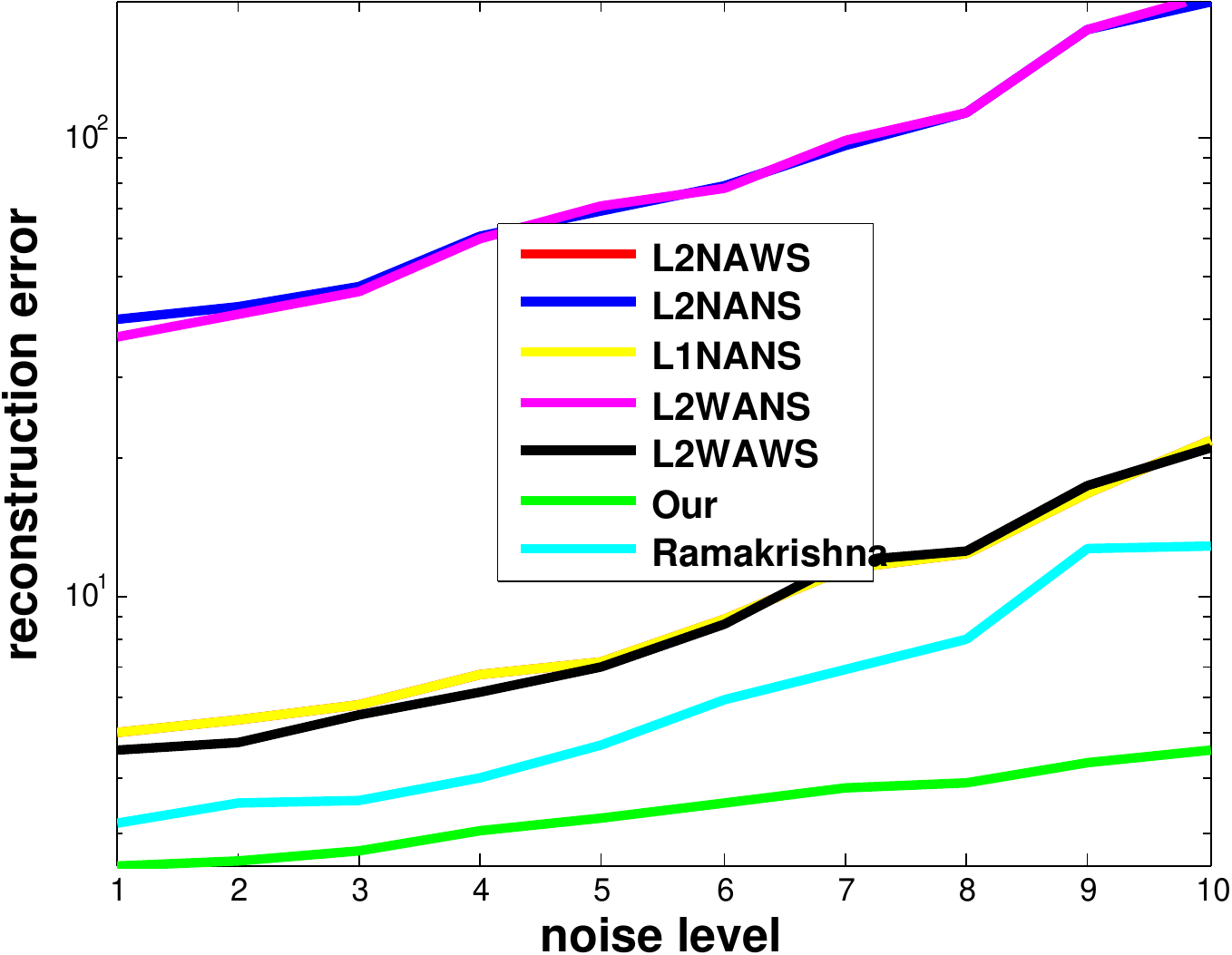}
\caption{\textbf{Controlled experiment on the CMU dataset:} 3D
pose estimation errors when different levels of noises are added
to 2D poses. See Section \ref{sec:influenc_inaccurate}.}
\vspace{-1em} \label{fig:noise_CMU}
\end{figure}

\begin{figure}
\centering
\includegraphics[width=2in]{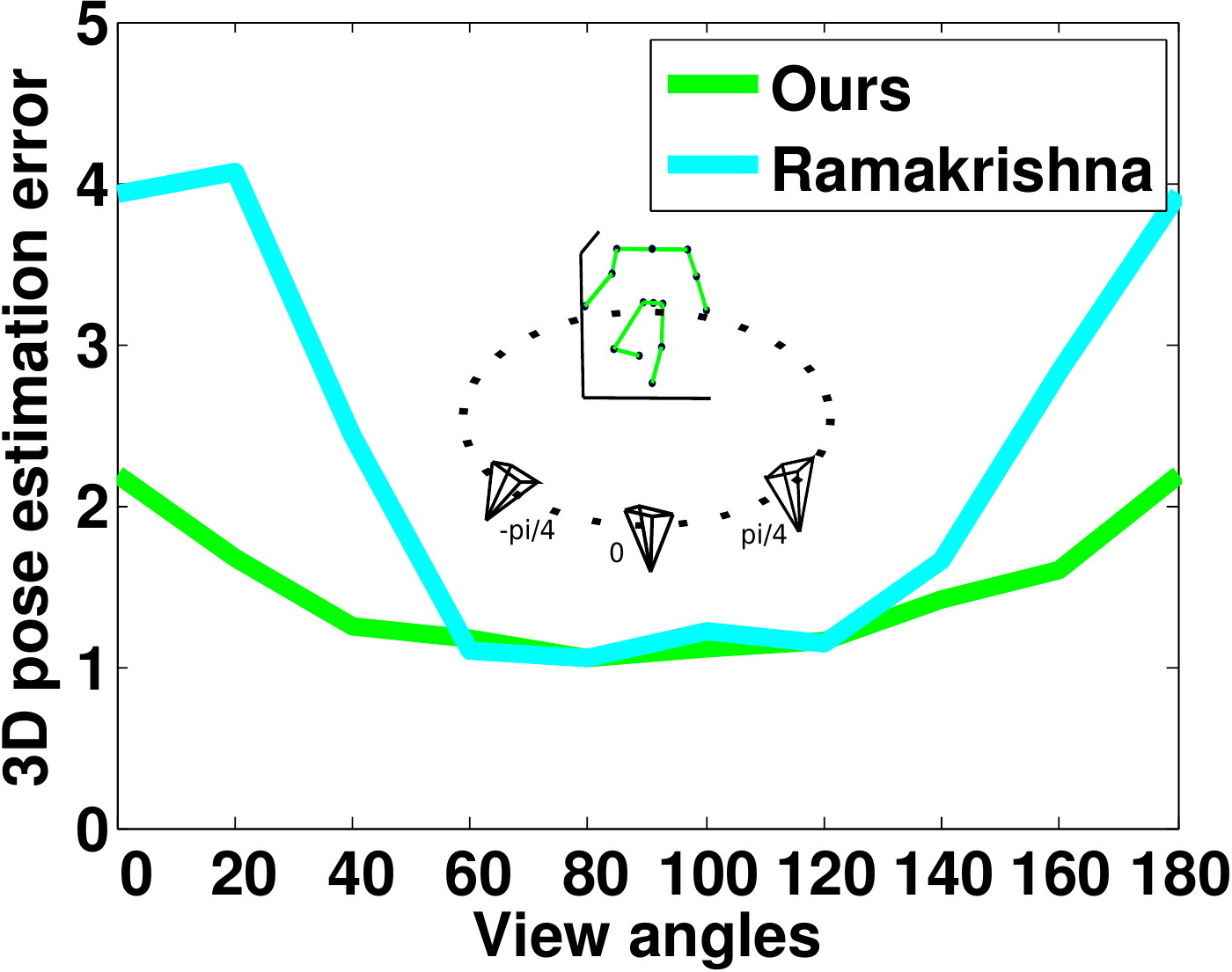}
\caption{\textbf{Controlled experiment on the CMU dataset:} 3D
pose estimation error when the human-camera angle varies from 0 to
180. See Section \ref{sec:humancamera}.} \vspace{-1em}
\label{fig:camera_angle}
\end{figure}

\subsection{Real Experiments}
Given an image, we first detect the 2D joint locations by a detector
\cite{Yang2D}, from which we estimate the corresponding 3D pose
using the proposed approach.

\subsubsection{Comparisons to the State-of-the-arts}
\label{sec:realexperiment} We compare our 3D pose estimator
against a state-of-the-art method \cite{Ramakrishna} on the UvA
dataset. Figure \ref{fig:CMU_UVA_REAL} shows the estimation errors
on each joint. Our approach achieves smaller estimation errors on
all joints, especially for the left and the right hands. This
proves that our approach is robust to inaccurate 2D joint
locations. We also compare our approach with the state-of-the-arts \cite{SimoSerraCVPR2012} \cite{Daubney} on the HumanEva
dataset. Table \ref{table:stateofart} shows the root mean square
errors adopted in \cite{SimoSerraCVPR2012}. Our approach
outperforms both \cite{SimoSerraCVPR2012} and \cite{Daubney}.

\subsubsection{Evaluation on Camera Parameter Estimation}
\label{sec:exp_camera} Our camera parameter estimation usually
converges within nine iterations. Figure \ref{fig:camera} shows
the 3D pose estimation results using the estimated cameras and
groundtruth cameras, respectively. We can see
that the difference is subtle for $70\%$ of cases. We discover
that the initialization of the 3D pose can influence the
estimation accuracy. So we cluster the training poses into $30$
clusters and initialize the 3D pose with each of the cluster
centers for parallel optimization. We keep the one with the
smallest error. We see that the performance can be further improved.

\begin{figure}
\centering
\includegraphics[width=1.9in]{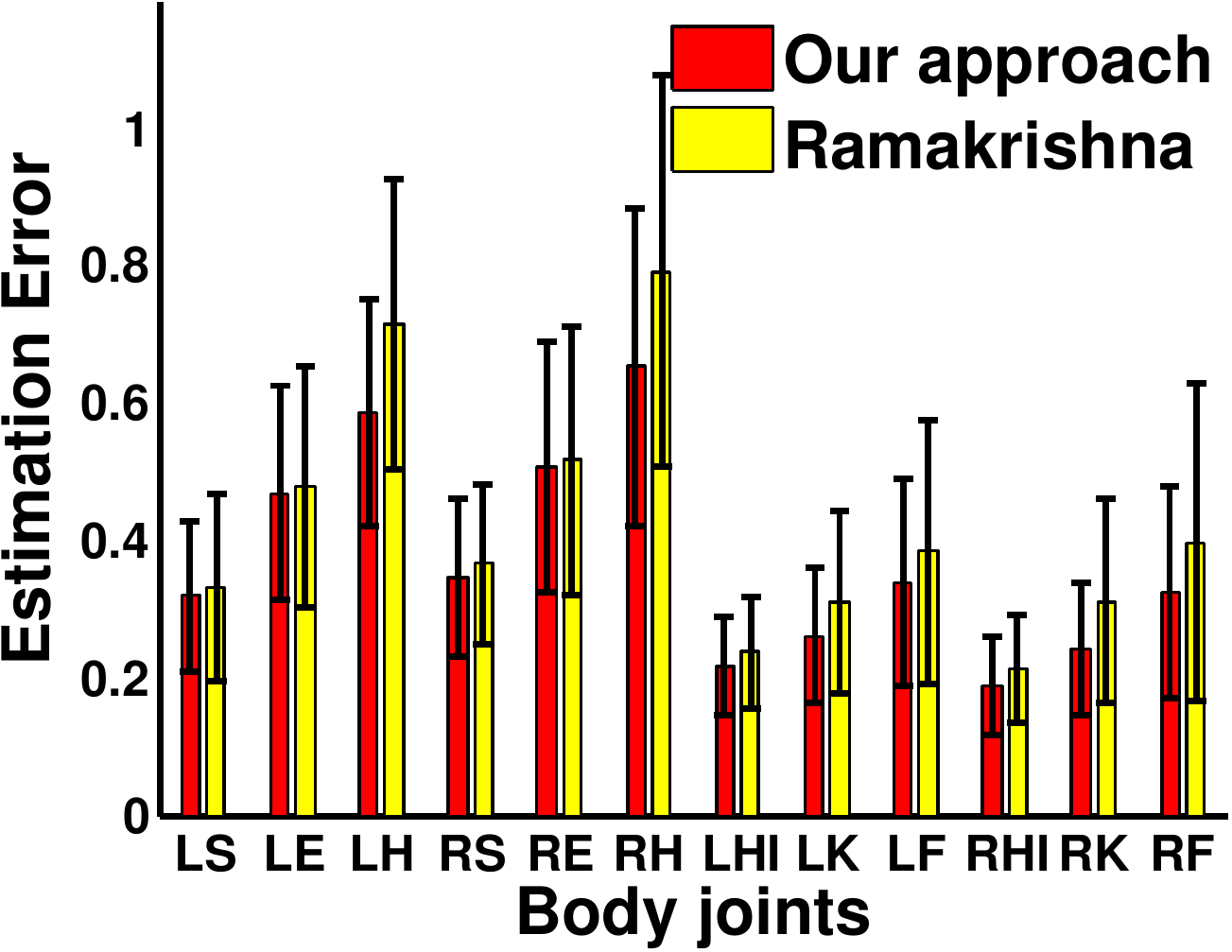}
\caption{\textbf{Real experiment on the UvA dataset:} comparison
with a state-of-the-art \cite{Ramakrishna}. Both average
estimation errors and standard deviations are shown for each joint
({\em i.e.} left shoulder, left elbow, left hand, right shoulder, right
elbow, right hand, left hip, left knee, left foot, right hip,
right knee and right foot). See Section \ref{sec:realexperiment}.}
\label{fig:CMU_UVA_REAL}
\end{figure}

\subsubsection{Evaluation on 2D Pose Estimation}
We project the estimated 3D pose to 2D and compare with the
original 2D estimation \cite{Yang2D}. We report the results
using two criteria. The first is the probability of correct pose
(PCP) \cite{Yang2D} --- an estimated body part is considered correct if its
segment endpoints lie within 50\% of the length of the
ground-truth segment from their annotated location.
The second criterion is the Euclidean distance between the
estimated 2D pose and the groundtruth in pixels as in
\cite{SimoSerraCVPR2012}. Table \ref{table:2dpose} shows that our
approach performs the best on six body parts. In particular, we
improve over the original 2D pose estimators by about $0.03$
(0.741 vs. 0.714) using the first PCP criteria. Our approach also performs the best using the
second criterion.

\section{Conclusion}
\label{sec:summary} We address the problem of estimating 3D human
poses from a single image. The approach is used in conjunction
with an existing 2D pose detector. It is robust to inaccurate 2D pose
estimations by using a sparse basis representation,
anthropometric constraints and an $L_1$-norm projection error
metric. We use an efficient alternating direction method to solve
the optimization problem. Our approach outperforms the
state-of-the art ones on three benchmark datasets.

\textbf{Acknowledgements:}We'd like to thank for the support from the following research grants NSFC-61272027, NSFC-61231010, NSFC-61121002, NSFC-61210005 and USA ARO Proposal 62250-CS. And, this material is based upon work supported by the Center for Minds, Brains and Machines (CBMM), funded by NSF STC award CCF-1231216. Z. Lin is supported by NSF China (grant nos. 61272341, 61231002, 61121002) and MSRA.
\begin{figure}
\centering
\includegraphics[width=1.9in]{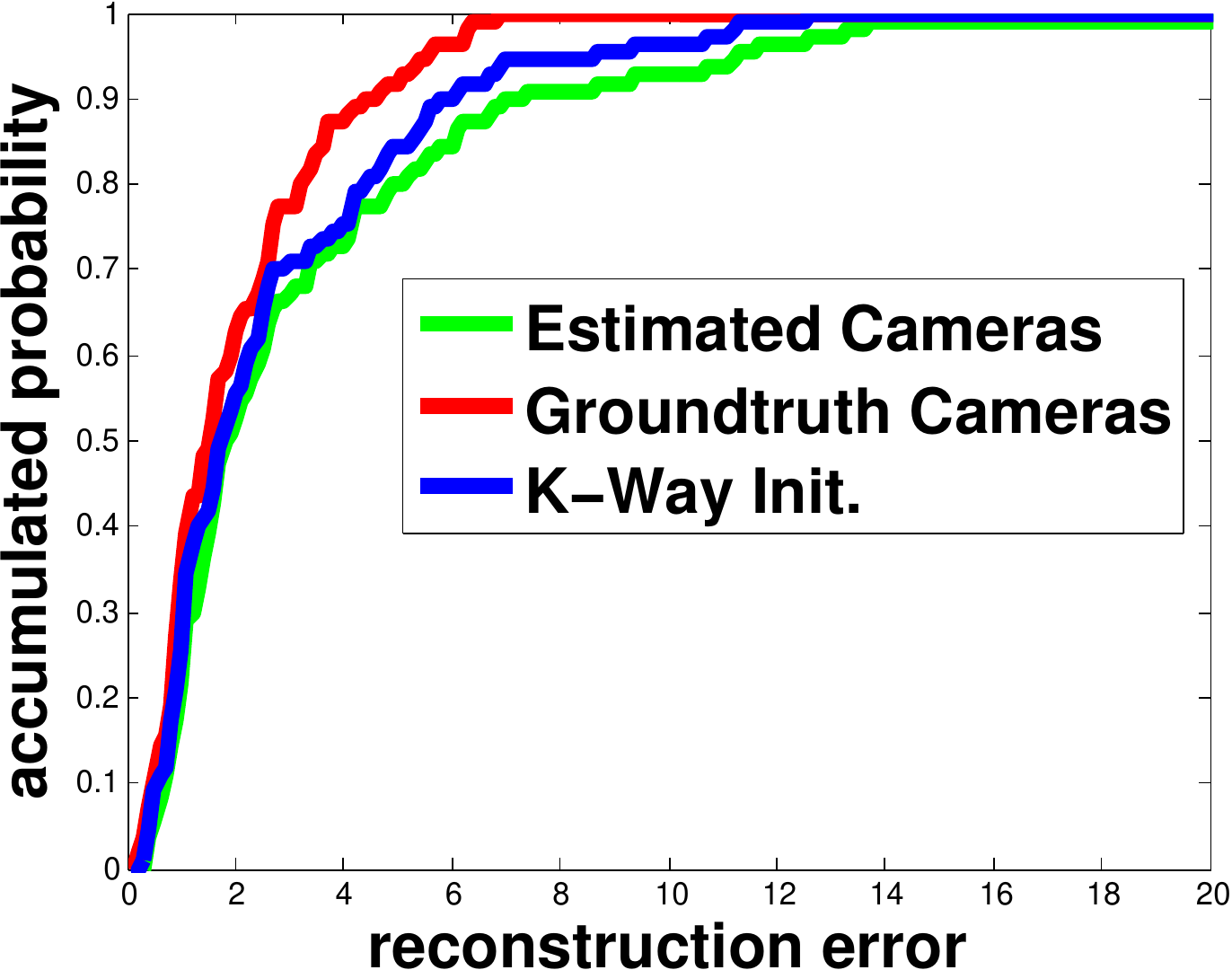}
\caption{\textbf{Real experiment on the CMU dataset:} cumulative
distribution of 3D pose estimation errors when camera parameters
are (1) assigned by groundtruth, estimated by initializing the 3D
pose with (2) mean pose, or (3) $30$ cluster centers. The y-axis is the percentage of the cases whose estimation error is less than or equal to the corresponding x-axis value on the curves. } \vspace{-1em} \label{fig:camera}
\end{figure}

{\small
\bibliographystyle{ieee}
\bibliography{egpaper_final}
}

\section{Appendix: Optimization by ADM}
\label{sec:optimization} Due to space limit, we only sketch the
major steps of ADM for our optimization problems. In the
following, $k$ and $l$ are the number of iterations.

\subsection{3D Pose Estimation}
We introduce two auxiliary variables $\beta$ and $\gamma$ and rewrite Eq.~(\ref{eq:final}) as:
\begin{eqnarray}
\begin{array}{rl}
\underset{\alpha,\beta, \gamma}{\text{min}} & {\left\| \gamma \right\|_1 + \theta \left\| \beta \right\|_1 }\\
\text{s.t.} & \gamma = x-M \left(B \alpha +\mu \right) , \quad \alpha = \beta,  \\
& \left\| C_i \left(B \alpha+\mu \right) \right\|_2^2 = L_i,
i=1,\cdots,m.
\end{array}
\label{eq:adm}
\end{eqnarray}

The augmented Lagrangian function of  Eq.~(\ref{eq:adm}) is:
\begin{eqnarray*}
\begin{array}{rl}
\mathcal{L}_{1}(\alpha, \beta, \gamma, \lambda_1, \lambda_2,\eta)=
|| \gamma
||_1 + \theta || \beta ||_1 + \\
\lambda_1^T{[\gamma-x+M (B
\alpha+\mu)]}+\lambda_2^T{(\alpha-\beta)}+ \\
\frac{\eta}{2}{
\left[||\gamma-x+M (B \alpha +\mu)||^2+||\alpha-\beta||^2 \right]}
\end{array}
\label{eq:lagrange}
\end{eqnarray*}
where $\lambda_1$ and $\lambda_2$ are the Lagrange multipliers and
$\eta>0$ is the penalty parameter. ADM is to update the variables
by minimizing the augmented Lagrangian function w.r.t. the
variables alternately.

\subsubsection{Update $\gamma$}
We discard the terms in $\mathcal{L}_1$ which are independent of
$\gamma$ and update $\gamma$ by:
\begin{equation*}
\gamma^{k+1} = \underset{\gamma}{\text{argmin}}{\left\| \gamma
\right\|_1+\frac{\eta_k}{2} \left\| \gamma-\left[x-M (B \alpha^k +
\mu) - \frac{\lambda_1^k}{\eta_k}\right] \right\| ^2}
\label{eq:update_gamma}
\end{equation*}
which has a closed form solution \cite{liu2013linearized}.

\subsubsection{Update $\beta$}
We drop the terms in $\mathcal{L}_1$ which are independent of
$\beta$ and update $\beta$ by:
\begin{equation*}
\beta^{k+1} = \underset{\beta}{\text{argmin}}{\left\| \beta
\right\|_1+\frac{\eta_k}{2 \theta} \left\|
\beta-\left(\frac{\lambda_2^k}{\eta_k}+\alpha^k\right) \right\|^2}
\label{eq:update_beta}
\end{equation*}
which also has a closed form solution \cite{liu2013linearized}.

\subsubsection{Update $\alpha$}
We dismiss the terms in $\mathcal{L}_1$ which are independent of
$\alpha$ and update $\alpha$ by:
\begin{eqnarray}
\begin{array}{rl}
\alpha^{k+1} =  \arg\min\limits_{\alpha}
& {z^T W z } \\
\mbox{s.t.} & z^T \Omega_i z = 0, \quad i=1,\cdots,m
\end{array}\label{eq:update_alpha}
\end{eqnarray}
where  $z=[\alpha^T \quad 1]^T$, \\
$W\!\!=\!\!\left(  \begin{array}{cc}
B^T M^T M B + I & 0 \\
\!\!\!\!2 \left[ \left( \gamma^{k+1}-x+M \mu +
\frac{\lambda_1^k}{\eta_k} \right)^T M B - \beta^{k+1} +
\frac{\lambda_2^k}{\eta_k} \right] & 0
\end{array} \right)$
and $\Omega_i=\left( \begin{array}{cc}
B^T C_i^T C_i B & B^T C_i^T C_i \mu \\
\mu^T C_i^T C_i B & \mu^T C_i^T C_i \mu - L_i
\end{array} \right)$.

Let $Q=z z^T$. Then the objective function becomes $z^T W
z=\mbox{tr}(WQ)$ and Eq.~(\ref{eq:update_alpha}) is transformed
to:
\begin{eqnarray}
\begin{array}{rl}
\underset{Q}{\text{min}}
& {\mbox{tr}(W Q)} \\
\mbox{s.t.}
& \mbox{tr}(\Omega_i Q) = 0, \quad i=1,\cdots,m, \\
& Q \succeq 0, \quad \mbox{rank}(Q) \leq 1.
\end{array}\label{eq:update_alpha2}
\end{eqnarray}

We still solve problem \eqref{eq:update_alpha2} by the alternating
direction method \cite{liu2013linearized}. We introduce an
auxiliary variable $P$ and rewrite the problem as:
\begin{eqnarray}
\begin{array}{rl}
\underset{Q, P}{\text{min}}
& {\mbox{tr}(W Q)} \\
\text{s.t.} & \mbox{tr}(\Omega_i Q) = 0, \quad i=1,\cdots,m, \\
& P = Q, \quad \mbox{rank}(P) \leq 1, \quad P \succeq 0.
\end{array}
\label{eq:update_alpha_aux}
\end{eqnarray}
Its augmented Lagrangian function is:
\begin{equation*}
\mathcal{L}_{2}(Q,P,G,\delta)=\mbox{tr}(W Q) + \mbox{tr}(G^T(Q-P))
+ \frac{\delta}{2}{\left\| Q-P \right\|_F^2 }
\label{eq:update_alpha_aux2}
\end{equation*}
where $G$ is the Lagrange Multiplier and $\delta>0$ is the penalty
parameter. We update $Q$ and $P$ alternately.

\begin{itemize}
\item {Update $Q$: }
\begin{equation}
Q^{l+1}=\argmin\limits_{
\begin{array}{c}
\mbox{tr}(\Omega_i Q)=0,\\
i=1,\cdots,m
\end{array}
} \mathcal{L}_{2}(Q,P^l,G^l,\delta_l). \label{eq:update_T}
\end{equation}
This is convex and solved using CVX \cite{c23}, a package for specifying and solving convex programs.

\item{Update $P$: } We discard the terms in $\mathcal{L}_{2}$
which are independent of $P$ and update $P$ by:
\begin{equation}
P^{l+1}=\argmin\limits_{
\begin{array}{c}
P \succeq 0,\\
\mbox{rank}(P) \leq 1
\end{array}
} \left\|P-\tilde{Q}\right\|_F^2 \label{eq:update_Y}
\end{equation}
where $\tilde{Q}=Q^{l+1}+\frac{2}{\delta_l}{G^l}$. Note that $
\left\|P-\tilde{Q} \right\|_F^2$ is equal to $\left\| P-
\frac{\tilde{Q}^T+\tilde{Q}}{2} \right\|_F^2$. Then
\eqref{eq:update_Y} has a closed form solution by the following
lemma.
\end{itemize}

\begin{lemma}
\label{lemma1} The solution to
\begin{equation}
\underset{P}{\text{min}} {||P-S||_F^2} \quad \text{s.t.} \quad P
\succeq 0, \quad \mbox{rank}(P) \leq 1 \label{eq:update_Y'}
\end{equation}
is $P=\max(\zeta_1,0) \nu_1 \nu_1^T$, where $S$ is a symmetric
matrix and $\zeta_1$ and $\nu_1$ are the largest eigenvalue and
eigenvector of $S$, respectively.
\end{lemma}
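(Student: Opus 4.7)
The plan is to exploit the two constraints to parameterize $P$ as a rank-one positive semidefinite matrix, then reduce the Frobenius-norm objective to a scalar optimization via the eigendecomposition of $S$.

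First, I would note that any $P$ satisfying $P \succeq 0$ and $\mathrm{rank}(P) \le 1$ can be written as $P = p p^T$ for some vector $p$ (with $p = 0$ corresponding to the $P = 0$ case). Substituting this into the objective and expanding,
\begin{equation*}
\|pp^T - S\|_F^2 = \|p\|_2^4 - 2 p^T S p + \|S\|_F^2,
\end{equation*}
using $\|pp^T\|_F^2 = \mathrm{tr}(pp^T pp^T) = \|p\|_2^4$ and $\mathrm{tr}(pp^T S) = p^T S p$ (for symmetric $S$). Since $\|S\|_F^2$ is a constant, the problem reduces to minimizing $\|p\|_2^4 - 2 p^T S p$ over $p$.

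Next I would diagonalize: since $S$ is symmetric, write $S = \sum_i \zeta_i \nu_i \nu_i^T$ with orthonormal eigenvectors $\nu_i$ and eigenvalues $\zeta_1 \ge \zeta_2 \ge \cdots$. Expanding $p = \sum_i c_i \nu_i$ gives $\|p\|_2^2 = \sum_i c_i^2$ and $p^T S p = \sum_i \zeta_i c_i^2$, so the objective becomes
\begin{equation*}
\Bigl(\sum_i c_i^2\Bigr)^2 - 2 \sum_i \zeta_i c_i^2.
\end{equation*}
Setting $t_i = c_i^2 \ge 0$ and $T = \sum_i t_i$, for any fixed $T$ the linear term $-2\sum_i \zeta_i t_i$ is minimized by concentrating all mass on the index with the largest eigenvalue, so $t_1 = T$ and $t_i = 0$ for $i > 1$, yielding $T^2 - 2 \zeta_1 T$. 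Minimizing over $T \ge 0$ gives $T^\ast = \max(\zeta_1, 0)$, and therefore $P^\ast = T^\ast \nu_1 \nu_1^T = \max(\zeta_1, 0)\, \nu_1 \nu_1^T$.

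The only subtlety is the degenerate case $\zeta_1 \le 0$, where the scalar minimization forces $T = 0$ and hence $P = 0$; the $\max(\cdot, 0)$ in the statement precisely captures this. A second minor issue is uniqueness when $\zeta_1$ has multiplicity greater than one — then $\nu_1$ is only defined up to a choice within the top eigenspace, but any such choice yields an optimum. I do not expect a serious obstacle anywhere; the key step is simply recognizing that the rank-one PSD constraint collapses the matrix problem to a 1-D optimization in the eigenbasis of $S$.
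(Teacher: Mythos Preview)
Your proof is correct and follows essentially the same approach as the paper: parameterize the rank-one PSD matrix, expand the Frobenius norm, and reduce to the scalar quadratic $T^2 - 2\zeta_1 T$ whose nonnegative minimizer is $\max(\zeta_1,0)$. The only cosmetic difference is that the paper writes $P=\zeta\nu\nu^T$ with unit $\nu$ and invokes the Rayleigh bound $\nu^T S\nu\le\zeta_1$ directly, whereas you expand $p$ in the eigenbasis of $S$; both arrive at the same one-dimensional optimization.
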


\begin{proof}
Since $P$ is a symmetric semi-positive definite matrix and its
rank is one, we can write $P$ as: $P=\zeta \nu \nu^T$, where
$\zeta \geq 0$. Let the largest eigenvalue of $S$ be $\zeta_1$,
then we have $\nu^T S \nu \leq \zeta_1$, $\forall \nu$. Then we
have:
\begin{eqnarray}
\begin{array}{rl}
||P-S||_F^2 &= ||P||_F^2 + ||S||_F^2 - 2 \mbox{tr}(P^TS) \\
& \geq \zeta^2 + \sum_{i=1}^n{\zeta_i^2}-2 \zeta \zeta_1 \\
&=(\zeta-\zeta_1)^2 + \sum_{i=2}^n{\zeta_i^2} \\
& \geq \sum_{i=2}^n{\zeta_i^2} +  \min(\zeta_1, 0)^2
\end{array}
\end{eqnarray}
\end{proof}
The minimum value can be achieved when $\zeta=\max(\zeta_1,0)$ and
$\nu=\nu_1$.

\subsection{Camera Parameter Estimation}
We introduce an auxiliary variable $R$ and rewrite
Eq.~(\ref{eq:camera}):
\begin{eqnarray}
\begin{array}{rl}
\underset{R,m_1,m_2}{\text{min}}
& {\left\|R\right\|_1} \\
\text{s.t.} & R=X-\left( \begin{array}{c}m_1^T \\ m_2^T
\end{array} \right) Y, \quad m_1^Tm_2=0.
\end{array}
\label{eq:camera_objective}
\end{eqnarray}

We still use ADM to solve problem (\ref{eq:camera_objective}). Its
augmented Lagrangian function is:
\begin{eqnarray*}
\begin{array}{rl}
&\mathcal{L}_{3}(R,m_1,m_2,H,\zeta,\tau)\\
=&\left\| R \right\|_1 + \mbox{tr}\left(H^T \left[ \left(
\begin{array}{c}m_1^T \\ m_2^T
\end{array}\right) Y+R-X \right]\right)+ \zeta (m_1^T m_2) \\
&+ \frac{\tau}{2}{ \left[ \left\| \left( \begin{array}{c}m_1^T \\
m_2^T \end{array} \right) Y+R-X \right\|_F^2 +
\left(m_1^Tm_2\right)^2 \right]}
\end{array}\label{eq:lagrange3}
\end{eqnarray*}
where $H$ and $\zeta$ are Lagrange multipliers and $\tau>0$ is the
penalty parameter.

\subsubsection{Update $R$}
We discard the terms in $\mathcal{L}_{3}$ which are independent of
$R$ and update $R$ by:
\begin{equation}
R^{k+1} = \underset{R}{\text{argmin}}{\left\| R
\right\|_1+\frac{\tau_k}{2}{ \left\| R +  \left(
\begin{array}{c} \left(m_1^k\right)^T \\ \left(m_2^k \right)^T
\end{array} \right)Y - X +
\frac{H^k}{\tau_k}\right\|_F^2}} \\ \nonumber \label{eq:updateE}
\end{equation}
which has a closed form solution \cite{liu2013linearized}.
\subsubsection{Update $m_1$}
We discard the terms in $\mathcal{L}_{3}$ which are independent of
$m_1$ and update $m_1$ by:
\begin{eqnarray}
\begin{array}{c}
m_1^{k+1} = \underset{m_1}{\text{argmin}}{\left\| \left(
\begin{array}{c} m_1^T \\ \left( m_2^k \right)^T \end{array}
\right) Y + R^{k+1} - X + \frac{H^k}{\tau_k} \right\|_F^2} \\
\nonumber  { + \left( m_1^Tm_2^k+\frac{\zeta^k}{\tau_k}\right)^2 }
\end{array}
\label{eq:updatem1}
\end{eqnarray}
This has a closed form solution.

\subsubsection{Update $m_2$}
We discard the terms in $\mathcal{L}_{3}$ which are independent of
$m_2$ and update $m_2$ by:
\begin{eqnarray}
\begin{array}{c}
m_2^{k+1} = \underset{m_2}{\text{argmin}}{\left\| \left(
\begin{array}{c} \left(m_1^{k+1}\right)^T \\  m_2^T \end{array}
\right) Y + R^{k+1} - X + \frac{H^k}{\tau_k} \right\|_F^2} \\
\nonumber { + \left( \left(
m_1^{k+1}\right)^Tm_2+\frac{\zeta^k}{\tau_k}\right)^2}
\end{array}
\label{eq:updatem2}
\end{eqnarray}
This has a closed form solution.

\end{document}